\DeclareRobustCommand\onedot{\futurelet\@let@token\@onedot}
\def\@onedot{\ifx\@let@token.\else.\null\fi\xspace}
\def\eg{\emph{e.g}\onedot} 
\def\ie{\emph{i.e}\onedot} 
\def\etc{\emph{etc}\onedot}
\theoremstyle{plain}
\newtheorem{theorem}{Theorem}[section]
\newtheorem{proposition}[theorem]{Proposition}
\newtheorem{lemma}[theorem]{Lemma}
\newtheorem{corollary}[theorem]{Corollary}
\theoremstyle{definition}
\newtheorem{definition}[theorem]{Definition}
\theoremstyle{remark}
\providecommand{\R}{\mathbb{R}}
\providecommand{\p}{\mathcal{P}}
\title{FreeFlow: A Comprehensive Understanding on Diffusion Probabilistic Models via Optimal Transport
}
\author{
  Bowen~Sun, Shibao~Zheng \\
  Department of Electronic Engineering of SEIEE \\
  Shanghai Jiao Tong University \\
  Shanghai, China\\
  \texttt{\{sunbowen, sbzh\}@sjtu.edu.cn} \\
}
\begin{document}
\maketitle

\begin{abstract}
The blooming diffusion probabilistic models (DPMs) have garnered significant interest due to their impressive performance and the elegant inspiration they draw from physics. 
While earlier DPMs relied upon the Markovian assumption, recent methods based on differential equations have been rapidly applied to enhance the efficiency and capabilities of these models. 
However, a theoretical interpretation encapsulating these diverse algorithms is insufficient yet pressingly required to guide further development of DPMs. 
In response to this need, we present FreeFlow, a framework that provides a thorough explanation of the diffusion formula as time-dependent optimal transport, where the evolutionary pattern of probability density is given by the gradient flows of a functional defined in Wasserstein space. 
Crucially, our framework necessitates a unified description that not only clarifies the subtle mechanism of DPMs but also indicates the roots of some defects through creative involvement of Lagrangian and Eulerian views to understand the evolution of probability flow. 
We particularly demonstrate that the core equation of FreeFlow condenses all stochastic and deterministic DPMs into a single case, showcasing the expansibility of our method. 
Furthermore, the Riemannian geometry employed in our work has the potential to bridge broader subjects in mathematics, which enable the involvement of more profound tools for the establishment of more outstanding and generalized models in the future.
\end{abstract}

\section{Introduction}
Content generations by artificial intelligence are increasingly attractive because of their remarkable performance not only on image generation~\cite{goodfellow2020generative,karras2019style,rombach2022high} but also in broader domains such as context~\cite{devlin2018bert,openai2023gpt4} and video/audio generations~\cite{loeschcke2022text,zhang2022paddlespeech}.
DPMs inspired by diffusion phenomenon in physics~\cite{sohl2015deep} compose one of the most vibrant domains that have recently achieved considerable attention for its stable training and solid probabilistic deduction.
In a variety of scopes, DPMs show significant promise as a flexible approach to model complex high-dimensional distributions, whether on data generation or density estimation~\cite{kingma2021variational,haxholli2023faster}.
In terms of image generation, these methods take effect by learning from the simulated gradual diffusion of information then recover the origin inputs from noise through predicting the reverse process.
Earlier DDPM~\cite{ho2020denoising} applies Markovian assumption to conduct a series of diffusion steps, which is subsequently reformulated to non-Markovian process in DDIM~\cite{song2020denoising} and Ito process solved by the stochastic differential equation (SDE) in~\cite{song2020score}.
Similar to SMLD~\cite{song2019generative}, score matching is also used in~\cite{song2020score} for estimating data distribution during Langevin dynamics.
A series of ordinary differential equations (ODEs)~\cite{lu2022dpm,lu2022dpmpp,xu2022poisson,liu2023genphys} that exclude uncertainty by converting forward and backward process to deterministic procedure are then proposed to decrease calculation consumption.

\begin{figure}[t]
    \centering
    \includegraphics[width=0.85\columnwidth]{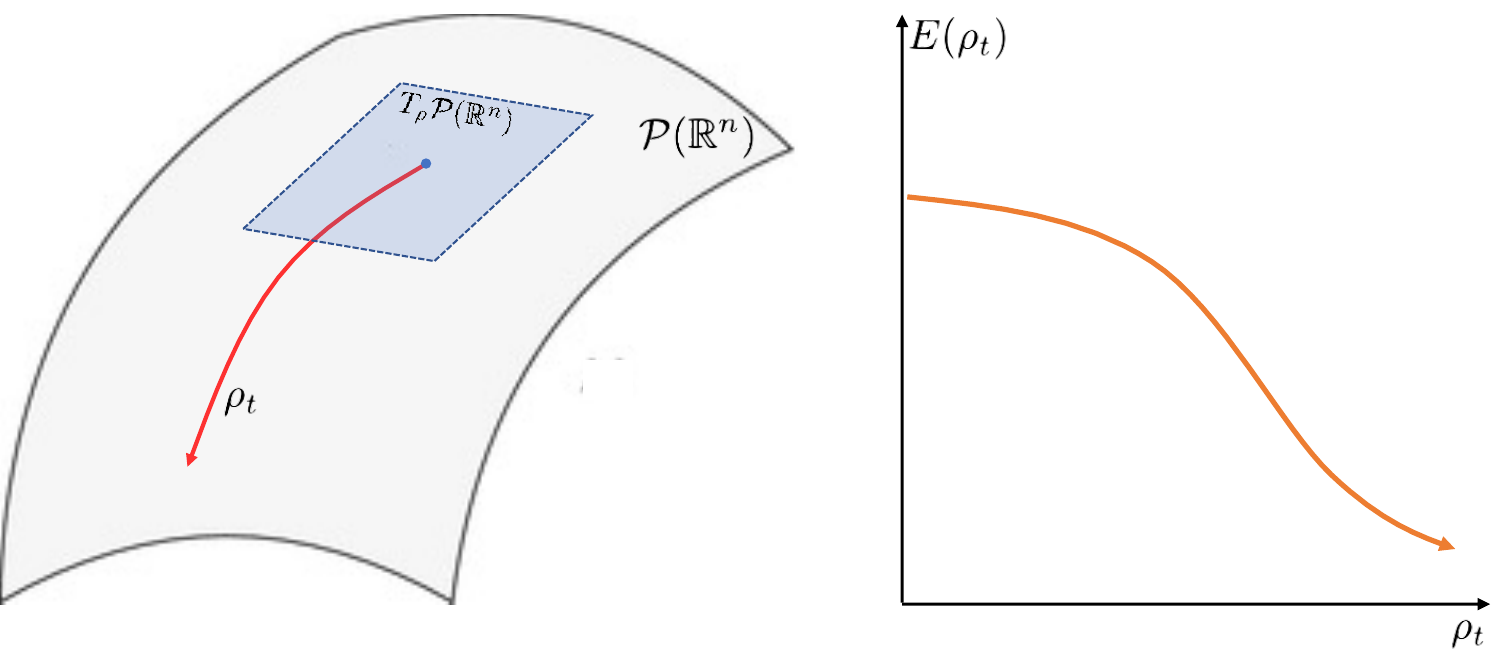}
    \caption{Illustration for the gradient flow of energy functional defined on Wasserstein space $\p(\R^n)$ that is an infinite dimensional Riemannian manifold comprising sets of probability measures on $\R^n$ endowed with Wasserstein distance.\protect\footnotemark[1] The red line with arrow and blue plane denote probability flow $\rho_t$ and tangent space $T_\rho \p(\R^n)$ respectively. The functional $E(\rho_t)$ shown on the right is decreasing most rapidly if $\rho_t$ evolves in accordance with its gradient flow.}
    \label{fig:illustration}
\end{figure}

The formulations appear to be divergent yet their ultimate fountain is worth investigation and unification.
\cite{luo2022understanding} suggests an uniform expression of these approaches by proving their equivalence to the hierarchical variational autoencoders, which is only limited to probably theory.
According to GenPhys~\cite{liu2023genphys}, it is possible for an extension to ODEs that satisfy continuity equations in physics to be considered as potential candidates for probability flow, which reflects the generality of DPMs from one perspective.
Unfortunately, there is still a lack of an expandable theoretical framework that not only can comprehensively explain the mechanisms of all DPMs but also serve as a bridge to involve more sophisticated analytical tools.

To address this requirement and explore the potential of DPMs, we present the FreeFlow declaring that the diffusion process is intrinsically the gradient flow of free energy functional defined on Wasserstein space, as illustrated in Fig.~\ref{fig:illustration}.
Combined with additional concepts from fluid dynamics, the FreeFlow further highlights that current models essentially compel random variables to stream in the manner of the Lagrangian description.
These perceptions presented in our work are consequently rooted in the optimal transport theory whose cost formula determine the metrics of Wasserstein space and tracks of probability evolution as well.
Primarily, we apply time-dependent optimal transport to observe the probability density as vector field $\rho_t$ on Riemannian manifold, then naturally transfer the distance between initial and final density to energy viewpoint by Benamou-Brenier theorem.
The gradient of functional $E(\rho_t)$ that represents the fastest dissipation direction is established as the kernel of our framework and afterward proved to be precisely the diffusion process of DPMs if a special formulation of $E$ is selected.

On basis of the FreeFlow, we are capable to undertake a more comprehensive investigation regarding DPMs. 
To be specific, we derives the Fokker-Planck equation by the gradient flow of a typical defined functional in FreeFlow thus constract its association with different kinds of diffusion algorithms.
In virtue of Lagrangian description, we indicate that linear map is the unique optimal transport with strictly convex cost function and analyze the danger of shock wave if randomly sampling data pairs.
It is also significant to emphasize the formulation of the cost function in optimal transport, particularly due to the eminent property of displacement interpolation we present when adopting a Euclidean distance as the cost.
By introducing extra analysis on propositions of the functional, which is useful for progressing of DPMs in the future.

\footnotetext[1]{Generally, Wasserstein space is defined by $\mathcal{P}(X)$, where $X$ is any compact metric space not necessarily the $\R^n$. Discussions are limited to $\mathcal{P}(\R^n)$ in this work.}

This paper is organized as follow: In Section~\ref{sec:background}, we introduce the overall background on DPMs about probability evolution in diffusion and related works; Section~\ref{sec:preliminaries} is devoted to fundamental concepts on optimal transport, gradient flows and Lagrangian-Eulerian descriptions; Section~\ref{sec:theoretical-framework} is the kernel of this paper which provides the definitions of FreeFlow, the deduction to Fokker-Planck equation and the displacement interpolation proposition for preparation; We subsequently apply FreeFlow to analyze DPMs through unified diffusion patterns and a Eulerian view for avoiding shock waves in Section~\ref{sec:rethink}; Conclusions and discussions are conducted in Section~\ref{sec:conclusion}.
Our main contributions are summarized as three points:
\begin{itemize}
    \item We propose an unified framework named \textbf{FreeFlow} to show that various diffusion patterns in DPMs can be intrinsically interpolated as the gradient flow of free energy functional, or equally, the geodesic in Wasserstein space.
    \item We demonstrate the Fokker-Planck equation is only one case of FreeFlow and analyze the significance impact on DPMs from formulations of cost function in optimal transport.
    \item FreeFlow highlights the Lagrangian description in fluid dynamics to observe current pipelines of DPMs, enabling the essential reveal on shock waves during straight generation and the reformulation to optimality equations by Eulerian manner.
\end{itemize}

\section{Background}\label{sec:background}
Within this section, we firstly present the explanation that pipelines of DPMs can be boiled down to the FP equation and then introduce some related works.

\subsection{Evolutionary Probability Density in DPMs}
Given an open subset $U$ of $\mathbb{R}^n$, the function $u(x, t):U \rightarrow \mathbb{R}$ of position $x$ and time $t$ subject to velocity field $v(x,t)$ and diffusivity $D(x,t)$ evolves, irrespective of sources or sinks, according to convection–diffusion equation:
\begin{equation}
\label{eq:convection–diffusion}
    \frac{\partial u(x,t)}{\partial t} \!=\! \nabla \cdot \bigl(u(x,t) v(x,t)\bigr) + \nabla \cdot \bigl(D(x,t)\nabla u(x,t) \bigr),
\end{equation}
where $\nabla \cdot$ and $\nabla$ denotes divergence and gradient with respect to position $x$ respectively.
It is essential for DPMs to be regarded as probabilistic version of Eq.~\eqref{eq:convection–diffusion} describing the diffusion process in physics, since it serves as the inspiration for diffusion-based frameworks.
Replacing $u$ with time-dependent probability density $\rho(x,t)$ (denoted as $\rho_t(x)$ for convenience), Eq.~\eqref{eq:convection–diffusion} is simply reformed to FP equation that is practically conducted in recent successful diffusion models.
If $D$ (denoted by $D_t$) is irrelevant of $x$, then Eq.~\eqref{eq:convection–diffusion} can be rewritten as an $n$-dimensional FP equation in the following form:
\begin{equation}\label{eq:fp}
    \frac{\partial \rho_t}{\partial t} = \nabla \cdot(\rho_t v_t) + D_t\Delta \rho_t,
\end{equation}
where $\Delta$ is Laplacian operator, and $\rho_t$ is probability density on $\R^n$ that is non negative and $\int_{\R^n} \rho(x,t) dx=1$.
For the sake of simplicity, independent variables will be omitted in subsequent discussions if there is no ambiguity, as done in Eq.~\eqref{eq:fp}.

Fokker-Planck equation describes the distribution evolution of probability density for random variables in Ito process.
Considering an $n$-dimensional random variable $X_t$ and the standard Wiener process with zero mean and unit variance $W_t$, the Ito process is given by
\begin{equation}\label{eq:ito}
    d X_t = \xi_t(X_t) dt + \sigma_t dW_t,
\end{equation}
where $\xi_t(X_t)=-v(x,t)$ and $\sigma_t=\sqrt{2D_t}$.
To be specific, $\xi_t(X_t)$ is the coefficient representing deterministic drift of the system influencing the mean shift of $X_t$, while $\sigma_t$ represents the variance of diffusion resulting from stochastic noise otherwise.
Under the constraint of sharing the same marginal probability densities at $t=0$ (the clean input) and the assumption of zero variance noise sampling, the practical reverse path of $X_t$ can be further rewritten to ODE:
\begin{equation}\label{eq:ODE}
    dX_t = f_t(X_t)dt,
\end{equation}
where the function $f_t(X_t)$ related to $\xi$ and $\sigma$ is the target for models to learn from in training stage.
Variations in the form of $f_t$ have a significant impact on the solution to Eq.~\eqref{eq:ODE}, leading to diverse performance outcomes for DPMs.
Besides, a stochastic process controlled by SDE is converted to a deterministic procedure, simplifying the realization of reverse.

The FP equation is a central component in the analysis of DPMs, as it not only provides crucial guidance during forward pipelines, but also has connections to more profound theories that are extensively discussed in Section~\ref{sec:example}.

\subsection{Related Works}
Based on the diffusion of probability flow with stochastic process in FP equation, different DPMs are proposed due to various formulation of $f_t(X_t)$.
The famous DDPM~\cite{ho2020denoising} realizes image generation under the assumption of Markovian process that is inherently discrete form of FP equation.
Endowed with Eq.~\eqref{eq:ito}, forward and reverse SDE are finally adopted in ~\cite{song2020score}.
ODEs can be derived by ignoring the noise item on the right hand side of Eq.~\eqref{eq:ito}, giving rise to equations expressing dynamics in continuous time along continuous path.
Without the stochastic item of SDE, the resultant ODEs makes it possible to boost the speed of generation in DPM-Solver~\cite{lu2022dpm} via introducing intensively developed ODE solvers.
With other selections of $f_t(X_t)$, some similar methods are proposed, \eg Poisson flow generative models (PFGM)~\cite{xu2022poisson} and GenPhys~\cite{liu2023genphys}.
Besides, the field of mathematics is currently abuzz with the study of optimal transport theory and gradient flows due to their intriguing associations.
We apply relevant results, \eg, energy functionals and displacement convexity on metric space to realize our analysis; see~\cite{mccann1997convexity,jordan1998variational,carrilloExampleDisplacementConvex2008,matthesFamilyNonlinearFourth2009,ambrosioCalculusHeatFlow2014}.

\section{Preliminaries}\label{sec:preliminaries}
In this section, we offer fundamental definitions and theories to lay the groundwork for our framework, which will be thoroughly analyzed afterwords.

\textbf{Optimal transport} theory has been continuously developed since Monge first presented this problem~\cite{monge1781memoire} and is currently in connection with Riemannian geometry, partial differential equations, gradient flow, \etc.
Given two probability spaces $(X, \mu)$, $(Y, \nu)$ and a cost function $c:X \times Y \rightarrow \mathbb{R}_{+} \cup \{+\infty \}$, the Monge problem is solving optimal map $T:X\rightarrow Y$ such that
\begin{equation}\label{eq:monge-problem}
    \inf \left\{ M(T) \coloneqq \left. \int_{X} c(x,T(x)) \mathrm{d} \mu \right| T_{\#} \mu =\nu \right\},
\end{equation}
where $T_{\#}\mu$ is push forward of $\mu$ subject to $(T_{\#}\mu)(A) \coloneqq \mu (T^{-1}(A))$ for any measurable set $A\subset Y$. 
Instead of finding the map $T$ in original Monge problem, the relaxed Kantorovich optimal scheme $K(\pi)$ is obtained by $\pi$ realizing
\begin{equation}
    \inf \left\{ K(\pi) \coloneqq \left. \int_{X \times Y} c(x,y) \mathrm{d} \pi(x,y) \right| \pi \in \Pi(\mu, \nu) \right\},
\end{equation}
where $\Pi(\mu, \nu)$ is the space composed of all joint probability measures 
$\pi$ on $X \times Y$ with marginals $\mu$ and $\nu$.
If $\mu$ and $\nu$ are two probability measures in Polish space $(\Omega,d)$, the Wasserstein metric with order $2$ between them is thus defined by 
\begin{equation}\label{eq:w-distance}
    W_2(\mu, \nu) \coloneqq \left( \inf_{\pi\in\Pi(\mu, \nu)} \int_{\Omega \times \Omega} d(x,y)^2 \mathrm{d}\pi(x,y) \right)^{1/2}.
\end{equation}
Equipped with distance $W_2$ as the metrics, the Wasserstein space $\p(\Omega)$ comprising all the set of probability measures on $\Omega$ is established.

\textbf{Gradient flows} are commonly used to describe certain equations in differential Riemannian space.
The Wasserstein space $\p(\Omega)$ is a classic example of an infinite-dimensional Riemannian space, from which one can derive the gradient flows.
For a time-dependent density function $\rho_t$ in Riemannian manifold $\p(\Omega)$ and functional $\Phi: \p(\Omega) \rightarrow \R$ assumed to be continuously differentiable, the gradient flow of $\Phi(\rho_t)$ on $\p(\Omega)$ is the equation
\begin{equation}\label{eq:gradient_flow}
    \frac{d\rho_t}{dt} = -\mathrm{grad}_{\rho_t} \Phi,
\end{equation}
where $\mathrm{grad}_{\rho_t}$ denotes the gradient of the functional at $\rho_t$.
Within this work, we typically consider the situation that $\Omega=\R^n$. 
Note that tangent space $T_\rho \p(\R^n)$ at $\rho$ is composed by functions $s$ on $\R^n$ that $\int s=0$.

\textbf{Lagrangian and Eulerian descriptions} are two perspectives for observing flow phenomenons connected by material derivative in the context of fluid dynamics.
The Lagrangian representation emphasizes trajectories of individual particles, whereas the Eulerian counterpart considers the physical quantity at fix positions in the field.
Likewise for probability flow field, the relation between Lagrangian and Eulerian view is given by 
\begin{equation}\label{eq:Lagrange-Euler}
    \left\{
    \begin{array}{l}
        \frac{d}{dt} \gamma_x(t) = v_t(\gamma_x(t)), \\
        \gamma_x(0) = x,
    \end{array}
    \right.
\end{equation}
where $\gamma_x(t)$ is the trace of particle $x$ at time $t$.
Their otherness and correlation are so universally applicable that we are permitted to treat FP equation in Eq.~\eqref{eq:fp} as Eulerian perspective yet the equal Ito process in Eq.~\eqref{eq:ito} as Lagrangian one. 
Moreover, if the velocity field $v_t(x)$ is Lipschitz continuous, there exists the unique solution $\gamma_x(t)$ to Eq.~\eqref{eq:Lagrange-Euler} for any initial point $x$ and $(t, x) \mapsto \gamma_{x}(t)$ is bijective and overall Lipschitz.
We will implement these properties for following analysis about defects of DPMs.

\section{Theoretical Framework}\label{sec:theoretical-framework}
We propose FreeFlow, whose form and related definitions are presented in Section~\ref{sec:freeflow}.
The Fokker-Planck equation is typically discussed as a special case of FreeFlow in Section~\ref{sec:example}.
Subsequent analysis on the convexity of cost functions in Section~\ref{sec:displacement-interpolation} highlights the importance of their formulations to prepare for further discussions about DPMs.

\subsection{FreeFlow}\label{sec:freeflow}
The FreeFlow framework provides a geometric interpretation of the diffusion-based evolutionary pipeline of probability density, which is formulated as a time-dependent optimal transport problem exploring the geodesic in Wasserstein space. 
Intrinsically, this diffusion process is linked to the gradient flow of the free energy function $E(\rho)$, which can be expressed as a differential equation:
\begin{equation}
    \frac{d}{dt} E(\rho) = -\left \langle \frac{\partial \rho}{\partial t}, \frac{\partial \rho}{\partial t} \right \rangle_\rho,
\end{equation}
where $\langle \cdot, \cdot \rangle_\rho$ denotes the Wasserstein scalar product of two vectors in tangent space $T_\rho\p(\R^n)$.
It can be further indicated by FreeFlow that DPMs are actually learning from the resultant direction of maximizing energy dissipation, which accounts for why multiple DPMs can operate.
To facilitate further investigation, we will first provide the definition of time-dependent optimal transport and then the metrics of space $\p(\R^n)$ taking the Riemannian geometric perspective of view.
\begin{definition}[Time-dependent Optimal Transport]
    If the continuous map/trajectory $\zeta_t(x)$ with $t\in [0,1]$ is associated by initial point $x$ and final point $y$ in space $\Omega$, where $\zeta_t(x)$ represents the displacement of $x$ at time $t$, then the time-dependent optimal transport map is
    \begin{equation}\label{eq:time-dependent-ot}
        \inf \left \{ \left. \int_X C(\zeta_t(x)) d \mu(x) \right | \zeta_0=\mathrm{id}, {\zeta_1}_{\#}\mu=\nu \right\},
    \end{equation}
    where $\nu(y)$ is the measure pushed forward from $\mu(x)$ and $C(\zeta_t(x))$ is the corresponding cost for displacement $\zeta_t(x)$.
    Moreover, time-dependent optimal transport is compatible with primal optimal transport if for all $x$ and $y$ we have
    \begin{equation}\label{eq:trajectory}
        c(x,y) = \inf \{ C(\zeta_t(x)) | \zeta_0=x, \zeta_1=y \}.
    \end{equation}
\end{definition}
By abuse of the notion, $\zeta_t$ tends to be a transport map in Eq.\eqref{eq:time-dependent-ot} similar to $T(x)$ of Eq.~\eqref{eq:monge-problem} and a trajectory in Eq.~\eqref{eq:trajectory}.
Note that $t \mapsto \zeta_t(x)$ should be at least segmental $C^1$ with respect to $t$ for $x$ of $\mu$-\ae thus velocity can be denoted by $\dot{\zeta_t}$.
While primal Monge problem solely pays attention on the initial and final positions, time-dependent OT calculates the cost by considering the traces of all particles involved.

\begin{definition}[Norm of $T_\rho\p(\R^n)$]
    If velocity field $v$ of particles evolving in accordance with probability density $\rho$ is completely controlled by their position, the norm $\| \cdot \|_\rho$ of tangent space $T_\rho\p$ is defined by
    \begin{equation}\label{eq:norm}
        \left\| \frac{\partial\rho}{\partial t} \right\|_\rho = \inf \left\{ \left. \int_{\R^n} \rho |v|^2 dx \right| \frac{\partial\rho}{\partial t} + \nabla \cdot (\rho v)=0 \right\}.
    \end{equation}
\end{definition}
Note that the probability density $\rho_t$ at time $t$ is the weak solution of continuity equation:
\begin{equation}\label{eq:continuity}
    \frac{\partial \rho_t}{\partial t} + \nabla \cdot (\rho_t v_t) = 0,
\end{equation}
which is shown as the condition of Eq.~\eqref{eq:norm}. 
The norm of $T_\rho\p(\R^n)$ is actually defined through borrowing the concept of total kinetic energy of particles in constraint of Eq.~\eqref{eq:continuity} that represents the conservation of mass.

\begin{definition}[Metrics of $\p(\R^n)$]
    Endowed with Eq.~\eqref{eq:w-distance} and Eq.~\eqref{eq:norm}, the Riemannian metrics of $\p(\R^n)$ can be given by $2$-Wasserstein distance:
    \begin{equation}
        W_2^2\!(\!\rho_0,\! \rho_1\!)\! = \! \inf \! \left\{\!\! \left.\int_0^1 \! \left\|\! \frac{\partial\rho}{\partial t} \!\right\|_{\rho(\!t\!)}^2 \!dt\! \right| \!\rho(0)\!=\!\rho_0,\! \rho(1)\!=\!\rho_1 \! \right\}\!,\!
    \end{equation}
    where $\rho_0$ and $\rho_1$ are two probability densities on $\R^n$ at time $t=0$ and $t=1$ respectively.
\end{definition}
We can regard this definition as the Benamou-Brenier problem minimizing the velocity field related action:
\begin{equation*}
    A(\rho, v) = \int_0^1 \left( \int_{\R^n} \rho_t(x) |v_t(x)|^2 dx\right)dt.
\end{equation*}
Thanks to the Benamou-Brenier theorem~\cite{benamou2000computational}, the square of Wasserstein distance is proved to be equivalent to the minimal action given by
\begin{equation*}
    W_2^2(\rho_0, \rho_1)=\inf \{ A(\rho,v)\}.
\end{equation*}
Therefore, the Wasserstein distance described by positions is converted to energy form where evolution under Eulerian view dominates.

\begin{definition}[Wasserstein Scalar Product]
    For two tangent vector $s_1$, $s_2$ in $T_\rho\p(\R^n)$, their Wasserstein scalar product is defined as
    \begin{equation}\label{eq:scalar_product}
        \langle s_1, s_2 \rangle_\rho = \int_{\R^n} \rho (\nabla \varphi_1 \cdot \nabla \varphi_2) dx, 
    \end{equation}
    where $s_i = \nabla \cdot (\rho \nabla \varphi_i)$ in $\R^n$.
\end{definition}
In reality, the Wasserstein scalar product is searching for the velocity field that minimizes total kinetic energy in Eq.~\eqref{eq:norm} satisfying the compatibility of continuity equation of Eq.~\eqref{eq:continuity}.
Thanks to the Benamou-Brenier formula, the velocity field $v_t$ is simply proved to be orthogonal with any solenoidal vector fields such that they are enabled to be the gradient field of some potential $\varphi$.
More details about why $v=\nabla \varphi$ is deduced in Appendix~\ref{apdx:metrics}.

Finally, the Wasserstein scalar product equips the definition of the gradient of a functional in the Wasserstein space, \ie, the Wasserstein gradient.
\begin{definition}[Wasserstein Gradient]
    For any smooth curve $\rho_t \in \p(\R^n)$, the Wasserstein gradient of functional $\Phi:\! \p(\R^n) \!\rightarrow\! \R$ at $\hat{\rho}$ is the unique function $\mathrm{grad}_{\hat{\rho}}\Phi$ such that
    \begin{equation}\label{eq:wasserstein_gradient}
        \left. \frac{d\Phi(\rho_t)}{dt} \right|_{t=0} = \left\langle \mathrm{grad}_{\hat{\rho}}\Phi, \left.\frac{\partial \rho_t}{\partial t}\right|_{t=0} \right \rangle_{\hat{\rho}},
    \end{equation}
    where $\rho_0=\hat{\rho}$.
\end{definition}
In consideration of Eq.~\eqref{eq:gradient_flow} and Eq.~\eqref{eq:scalar_product}, we attain the conclusion that the dissipation velocity of energy with respect to time is relevant to the Wasserstein gradient of $E$.
By selecting the form of the functional, multiple optimal directions are derived, giving rise to diverse generation trajectories proposed in generative diffusion methods.

\subsection{FreeFlow to Fokker-Planck Equation}\label{sec:example}
The FP equation, which is known to be the continuous expression equivalent to Markovian process in DDPM~\cite{ho2020denoising} and Ito process in SDE-based methods~\cite{song2020score}, is one of the fundamental principles of DPMs.
As a simple example of FreeFlow framework, one can naturally obtain the FP equation by deducing the gradient flow of an energy functional in specifically defined form.
\begin{theorem}\label{theorem:fp}
    Fokker-Planck equation is equivalent to the gradient flow of energy $E(\rho)$ given by
    \begin{equation}\label{eq:energy_defition}
        E(\rho) \coloneqq D_t \int_{\R^n} \rho \log \rho dx + \int_{\R^n} \rho \Psi dx,
    \end{equation}
    where $\Psi:\R^n \rightarrow \R$ is a smooth function subject to normalization condition that a constant $Z =\int_{\R^n} e^{-\Psi/D_t}$ exists.
\end{theorem}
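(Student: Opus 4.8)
The plan is to reproduce the classical Jordan--Kinderlehrer--Otto identity within this framework: I would show that the Wasserstein gradient of the free energy $E$ equals the negative divergence of the flux appearing in Eq.~\eqref{eq:fp}, so that the abstract gradient flow~\eqref{eq:gradient_flow} collapses onto the Fokker--Planck equation precisely when the drift is the gradient field $v_t=\nabla\Psi$. The route has three stages: compute the first variation of $E$, convert it into the Wasserstein gradient through the scalar product~\eqref{eq:scalar_product}, and read off the resulting partial differential equation.

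First I would differentiate $E(\rho_t)$ along an arbitrary smooth curve. Differentiating the two terms of~\eqref{eq:energy_defition} under the integral gives
\[
\frac{d}{dt}E(\rho_t)=\int_{\R^n}\bigl(D_t(\log\rho+1)+\Psi\bigr)\,\frac{\partial\rho}{\partial t}\,dx .
\]
Since every tangent vector obeys $\int_{\R^n}\partial_t\rho\,dx=0$ (conservation of mass, as recorded after Eq.~\eqref{eq:gradient_flow}), the additive constant $D_t$ is inert, leaving the effective first variation $\tfrac{\delta E}{\delta\rho}=D_t\log\rho+\Psi$.

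Next I would pass from this Euclidean variation to the Wasserstein gradient. Representing the tangent vector as $\partial_t\rho=\nabla\cdot(\rho\nabla\psi)$ in the form prescribed by the scalar product and integrating by parts yields
\[
\frac{d}{dt}E(\rho_t)=-\int_{\R^n}\rho\,\nabla\tfrac{\delta E}{\delta\rho}\cdot\nabla\psi\,dx .
\]
Comparing with Eq.~\eqref{eq:wasserstein_gradient} and Eq.~\eqref{eq:scalar_product} forces the potential of $\mathrm{grad}_\rho E$ to be $-\tfrac{\delta E}{\delta\rho}$, hence $\mathrm{grad}_\rho E=-\nabla\cdot\!\bigl(\rho\,\nabla\tfrac{\delta E}{\delta\rho}\bigr)$. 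Inserting this into~\eqref{eq:gradient_flow} and using the identity $\rho\,\nabla(D_t\log\rho)=D_t\nabla\rho$ gives
\[
\frac{\partial\rho}{\partial t}=\nabla\cdot(\rho\nabla\Psi)+D_t\Delta\rho ,
\]
which is exactly Eq.~\eqref{eq:fp} under $v_t=\nabla\Psi$.

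I expect the main obstacle to be the rigorous justification of the variational step rather than the algebra. Establishing $\mathrm{grad}_\rho E=-\nabla\cdot(\rho\nabla\tfrac{\delta E}{\delta\rho})$ presupposes that $\tfrac{\delta E}{\delta\rho}$ is regular enough to serve as a tangent-space potential and that the boundary terms from integration by parts vanish, which demands suitable decay of $\rho$ and of the flux $\rho\nabla\tfrac{\delta E}{\delta\rho}$ at infinity. The normalization hypothesis $Z=\int_{\R^n}e^{-\Psi/D_t}\,dx$ enters precisely here: it guarantees that the Gibbs measure $\rho_\infty\propto e^{-\Psi/D_t}$ is a genuine probability density, the unique stationary point where the flux and hence $\mathrm{grad}_\rho E$ vanish and $E$ is minimized. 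I would close by noting that only the \emph{gradient} of the first variation survives, which is why both the constant $D_t$ and the normalization of $\Psi$ are immaterial to the dynamics.
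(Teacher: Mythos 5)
Your proposal is correct and follows essentially the same route as the paper's proof: compute the $L^2$ first variation $\tfrac{\delta E}{\delta\rho}=D_t\log\rho+\Psi$ (up to an inert constant), identify $\mathrm{grad}_\rho E=-\nabla\cdot\bigl(\rho\,\nabla\tfrac{\delta E}{\delta\rho}\bigr)$ via the Wasserstein scalar product, and substitute into the gradient-flow equation to recover $\partial_t\rho=D_t\Delta\rho+\nabla\cdot(\rho\nabla\Psi)$. Your added remarks on boundary terms and on the fact that the normalization constant $Z$ plays no role in the dynamics (it only matters for the stationary/KL corollary) are accurate refinements, not deviations.
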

\begin{proof}
    Taking $\frac{\delta \Phi(\hat{\rho})}{\delta \rho}$ as the first $L^2$-variation of $\Phi$, we have
    \begin{equation*}
        \left. \frac{d\Phi(\rho_t)}{dt} \right|_{t=0} = \int_{\R^n} \frac{\delta \Phi(\hat{\rho})}{\delta \rho} \left. \frac{\partial \rho_t}{\partial t} \right |_{t=0} dx .
    \end{equation*}
    Based on Eq.~\eqref{eq:wasserstein_gradient} and Eq.~\eqref{eq:continuity} with zero Neumann boundary condition, we then have
    \begin{equation*}
        \left\langle \mathrm{grad}_{\hat{\rho}}\Phi, \left.\frac{\partial \rho_t}{\partial t}\right|_{t=0} \right \rangle_{\hat{\rho}} = \int_{\R^n} \hat{\rho} \left ( \nabla \frac{\delta \Phi(\hat{\rho})}{\delta \rho} \cdot \nabla \varphi \right) dx .
    \end{equation*}
    Because of the definition of Wasserstein scalar product in Eq.~\eqref{eq:scalar_product}, a neat form of Wasserstein gradient is deduced to
    \begin{equation}\label{eq:grad_neat}
        \mathrm{grad}_{\hat{\rho}}\Phi = -\mathrm{div} \left[ \nabla\left( \frac{\delta \Phi(\hat{\rho})}{\delta \rho} \right) \hat{\rho} \right]. 
    \end{equation}
    Now we take $\Phi=E$ defined in Eq.~\eqref{eq:energy_defition} and substitute it to Eq.~\eqref{eq:grad_neat}, we obtain
    \begin{equation}\label{eq:fp_energy}
        \begin{aligned}
            \frac{\partial \hat{\rho}}{\partial t} =-\mathrm{grad}_{\hat{\rho}} E &= \mathrm{div} \left[ (D_t \nabla \log \hat{\rho} + \nabla \Psi)\hat{\rho} \right] \\
            &= \mathrm{div} (D_t \nabla\hat{\rho} + \hat{\rho}\nabla\Psi) \\
            &= D_t \Delta \hat{\rho} + \nabla \cdot (\hat{\rho}\nabla \Psi), \\
        \end{aligned}
    \end{equation}
    where the fisrt equation is from Eq.~\eqref{eq:gradient_flow}.
    The final Eq.~\eqref{eq:fp_energy} is obviously the Fokker-Planck equation shown in Eq.~\eqref{eq:fp} hence Theorem~\ref{theorem:fp} is proved.
\end{proof}
This theorem clearly demonstrates that the Fokker-Planck equation, or the forward diffusion dynamics in DPMs, is the steepest descent of free energy $E$ defined in Eq.~\eqref{eq:energy_defition}.
That is to say, the optimal evolutionary trace satisfying the gradient flow of the energy functional is implicitly adopted for models to learn from, which results in the efficiency of DPMs.
Specifically, the diffusion process in DPMs is achieved by the velocity field of $\rho_t$ moving towards maximal entropy and minimal energy. 
It is evident that the negative entropy, which is represented by $\int_{\R^n} \rho \log \rho dx$, needs to be minimized in accordance with the principle of maximum entropy. 
On the other hand, $\int_{\R^n} \rho \Psi dx$ acts as the energy term, with $\nabla \Psi$ affecting the deterministic component of the Ito process in Eq.~\eqref{eq:ito}. 

\begin{corollary}
    The energy functional defined in Eq.~\eqref{eq:energy_defition} is Kullback–Leibler (KL) divergence up to addition by a constant.
\end{corollary}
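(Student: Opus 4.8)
The plan is to exhibit a single fixed reference measure against which $E(\rho)$ becomes a relative entropy, and then to sweep every $\rho$-independent quantity into the additive constant. The natural candidate, dictated entirely by the normalization hypothesis, is the Gibbs (Boltzmann) density $\rho_\infty(x) \coloneqq Z^{-1} e^{-\Psi(x)/D_t}$. It is a genuine probability density precisely because the assumption guarantees that $Z = \int_{\R^n} e^{-\Psi/D_t}\,dx$ is finite, and it is exactly the stationary solution of the Fokker--Planck equation of Theorem~\ref{theorem:fp}, so identifying it as the equilibrium of the flow is geometrically well motivated rather than ad hoc.

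First I would invert the defining relation to write the potential as $\Psi = -D_t \log \rho_\infty - D_t \log Z$. Substituting this expression into the energy term $\int_{\R^n}\rho\,\Psi\,dx$ and merging it with the entropy term $D_t\int_{\R^n}\rho\log\rho\,dx$ combines the two logarithms into the single integral $D_t\int_{\R^n}\rho\log(\rho/\rho_\infty)\,dx$, and leaves behind only the piece $-D_t\log Z\int_{\R^n}\rho\,dx$. Invoking the mass-normalization $\int_{\R^n}\rho\,dx = 1$ collapses this leftover to the scalar $-D_t\log Z$, which depends on $\Psi$ and $D_t$ alone and is therefore constant along the flow. The surviving integral is $D_t\,\mathrm{KL}(\rho\,\|\,\rho_\infty)$, yielding $E(\rho) = D_t\,\mathrm{KL}(\rho\,\|\,\rho_\infty) - D_t\log Z$, which is the relative entropy to the equilibrium up to the stated additive constant.

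There is no genuine analytic obstacle in this computation: the result is essentially a one-line substitution once $\rho_\infty$ has been identified, and every remaining step is bookkeeping. The two points that warrant care are, first, verifying that $\rho_\infty$ is legitimately normalized (so that the surviving term is the standard Kullback--Leibler divergence rather than a relative-entropy-like expression measured against an unnormalized weight), which is exactly what the finiteness of $Z$ buys us; and second, the positive scalar $D_t$ multiplying the divergence. Strictly the functional equals $D_t$ times the KL divergence plus the additive constant $-D_t\log Z$, so I would note that $D_t>0$ merely rescales the dissipation metric and does not alter the gradient-flow geometry, which is why the functional can be regarded as the KL divergence up to the additive constant as claimed.
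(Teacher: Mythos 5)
Your proof is correct and follows essentially the same route as the paper: both identify the Boltzmann density $\rho_\infty = Z^{-1}e^{-\Psi/D_t}$ as the reference measure and rewrite $E(\rho)$ as $D_t$ times the relative entropy to $\rho_\infty$ plus a $\rho$-independent term. If anything, your bookkeeping of the additive constant is the more careful one --- the constant is $-D_t\log Z$, whereas the paper's Eq.~\eqref{eq:kl} places $+\log Z$ inside the integral and then conditions on $Z=1$, $D_t=1$; your remark that $D_t>0$ is a harmless rescaling handles this more cleanly.
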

\begin{proof}
    As $t$ approaches infinity, $\rho_t$ in FP equation reaches a stationary state $\rho_\infty$ characterized by the Boltzmann distribution:
    \begin{equation*}
        \rho_\infty = \frac{e^{-\Psi/D_t}}{\int_{\R^n} e^{-\Psi/D_t} dx } = \frac{1}{Z}e^{-\Psi/D_t}.
    \end{equation*}
    The original energy functional can be hence rewritten as 
    \begin{equation}\label{eq:kl}
        E(\rho) = D_t \int_{\R^n} \rho \left( \log{\frac{\rho}{\rho_\infty}+\log{Z}} \right).
    \end{equation}
    If $Z=1$ and $D_t=1$, then Eq~\eqref{eq:kl} is the KL divergence of $\rho$ with respect to $\rho_\infty$.
\end{proof}
This corollary illustrates that when evolving along diffusion curves in DPMs, the gap of probability density is not measured by a strict mathematical distance but rather by KL divergence in practice.

\begin{proposition}\label{prop}
    If we specifically have $\Psi=\beta_t x^2/2$ where $\beta_t$ is a parameter irrelevant to $x$, then the stationary solution $\rho_\infty$ to Eq.~\eqref{eq:energy_defition} when $t$ approaches infinity is a normal distribution and $\rho_\infty \sim \mathcal{N}(0, \frac{D_t}{\beta_t}\mathbf{I})$.
\end{proposition}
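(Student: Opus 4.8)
The plan is to build directly on the Boltzmann form of the stationary state already established in the preceding corollary, so that the proposition reduces to a substitution followed by recognizing a Gaussian. Recall that the gradient flow of $E$ is the Fokker--Planck equation (Theorem~\ref{theorem:fp}), and that setting the flux to zero in $\partial_t\rho = \mathrm{div}[(D_t\nabla\log\rho + \nabla\Psi)\rho]$ yields the stationarity condition $D_t\nabla\log\rho_\infty + \nabla\Psi = 0$, whence $\rho_\infty = \frac{1}{Z}e^{-\Psi/D_t}$ with $Z = \int_{\R^n} e^{-\Psi/D_t}\,dx$. First I would simply insert the quadratic potential $\Psi = \beta_t |x|^2/2$ (reading $x^2$ as the squared Euclidean norm $|x|^2 = \sum_{i} x_i^2$, so that the potential is isotropic), which turns the exponent into $-\frac{\beta_t}{2 D_t}|x|^2$.

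Next I would verify that the normalization constant $Z$ is finite and compute it explicitly, since this is the one place where the hypothesis genuinely enters. The integral $\int_{\R^n} e^{-\frac{\beta_t}{2D_t}|x|^2}\,dx$ converges if and only if $\beta_t/D_t > 0$, which is exactly the normalization condition assumed in Theorem~\ref{theorem:fp}. Under that condition the standard multivariate Gaussian integral gives $Z = (2\pi D_t/\beta_t)^{n/2}$, so that $\rho_\infty(x) = (2\pi D_t/\beta_t)^{-n/2}\exp(-\frac{\beta_t}{2D_t}|x|^2)$.

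Finally I would match this against the density of an $n$-dimensional centered normal law $\mathcal{N}(0,\Sigma)$, namely $(2\pi)^{-n/2}|\Sigma|^{-1/2}\exp(-\tfrac12 x^\top \Sigma^{-1} x)$. Comparing the quadratic forms in the exponent forces $\Sigma^{-1} = \frac{\beta_t}{D_t}\mathbf{I}$, hence $\Sigma = \frac{D_t}{\beta_t}\mathbf{I}$, and the prefactors then agree automatically, confirming $\rho_\infty \sim \mathcal{N}(0, \frac{D_t}{\beta_t}\mathbf{I})$.

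As for difficulty, there is essentially no analytic obstacle: the entire statement is a one-line consequence of the Boltzmann formula once $\Psi$ is specialized. The only points demanding care are notational and hypothesis-checking rather than conceptual --- interpreting $x^2$ as $|x|^2$ so the potential is rotationally symmetric, ensuring $\beta_t$ and $D_t$ carry the same sign so the integral is finite, and computing the normalizing constant cleanly enough to confirm the covariance $\frac{D_t}{\beta_t}\mathbf{I}$ and the Gaussian prefactor simultaneously.
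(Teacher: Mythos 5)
Your proposal follows essentially the same route as the paper: specialize the Boltzmann stationary state $\rho_\infty=\frac{1}{Z}e^{-\Psi/D_t}$ to the quadratic potential, evaluate the Gaussian normalization, and read off the covariance. If anything you are more careful than the paper's own proof, which writes $Z=\sqrt{2\pi D_t/\beta_t}$ --- the one-dimensional value --- for an integral over $\R^n$, whereas your $Z=(2\pi D_t/\beta_t)^{n/2}$ and the explicit finiteness check $\beta_t/D_t>0$ are the correct $n$-dimensional statements.
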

\begin{proof}
    Substituting $\Psi=\beta_t x^2/2$ to the normalizing condition, we have
    \begin{equation*}
        Z = \int_{\R^n} e^{-\frac{\beta_t x^2}{2D_t}} dx = \sqrt{\frac{2\pi D_t}{\beta_t}}.
    \end{equation*}
    Therefore, the stationary solution is
    \begin{equation}
        \rho_\infty = \sqrt{\frac{\beta_t}{2\pi D_t}} e^{-\frac{\beta_t x^2}{2D_t}},
    \end{equation}
    which is a Gaussian distribution.
\end{proof}
This proposition presents that that with a subtle design of $\Psi$ and a sufficiently large value of $t$, the gradient flow of energy will eventually converge to a normal distribution.
Moreover, this property also enables models to be trained to revert from Gaussian white noise.
With perturbation gradually added to origin image $x_0$ in DPMs, the ultimate $x_t$ is hence close to a sample from this distribution controlled by $\beta_t$ and $D_t$.

\subsection{Displacement Interpolation}\label{sec:displacement-interpolation}
Although the explicit cost function appears to be discarded in the previous Eulerian field viewpoint, its impact on the transport process is inevitable due to its relation with displacement interpolation.
We demonstrate that transport functions deserve special consideration, as their structures can yield desirable properties that we prioritize.

Recalling transport cost function $C$ and $c$ in Eq.~\eqref{eq:trajectory}, we are permitted to rewrite their relation by
\begin{equation*}
    C(\zeta_t) = \int_0^1 c(\dot{\zeta_t}) dt,
\end{equation*}
if $\zeta_t$ is differential with respect to time and $c(\dot{\zeta_t})$ is differential transport cost.
We typically consider a special case that the cost function $c$ is convex on Euclidean space.
\begin{lemma}\label{lemma:covex-cost}
    Let $c$ be a convex function on $\R^n$, then
    \begin{equation}
        c(y-x) = \inf \left\{\left. \int_0^1 c(\dot{\zeta_t}) dt \right| \zeta_0=x, \zeta_1=y \right\}.
    \end{equation}
    By a stronger assumption that $c$ is strictly convex, then the unique minimal is obtained by the line:
    \begin{equation}
        \zeta_t = x+t(y-x), t\in[0,1].
    \end{equation}
\end{lemma}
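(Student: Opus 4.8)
The plan is to reduce the entire statement to Jensen's inequality applied to the convex cost $c$, exploiting the single linear constraint that every admissible curve must satisfy. First I would observe that for any curve $\zeta_t$ that is (at least segmentally) $C^1$ with $\zeta_0 = x$ and $\zeta_1 = y$, the fundamental theorem of calculus forces
\[
    \int_0^1 \dot{\zeta_t}\,dt = \zeta_1 - \zeta_0 = y - x.
\]
In other words, regarding the velocity $\dot{\zeta_t}$ as an $\R^n$-valued function on the probability space $([0,1],dt)$, its mean is pinned to $y-x$ regardless of the particular path chosen.

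With this in hand the lower bound is immediate: since $c$ is convex on $\R^n$, Jensen's inequality with the uniform measure $dt$ on $[0,1]$ gives
\[
    \int_0^1 c(\dot{\zeta_t})\,dt \;\geq\; c\!\left(\int_0^1 \dot{\zeta_t}\,dt\right) = c(y-x),
\]
for every admissible $\zeta_t$, whence $\inf\{\,\cdot\,\} \geq c(y-x)$ after taking the infimum. For the reverse inequality I would simply exhibit the straight line $\zeta_t = x + t(y-x)$, whose velocity is the constant $y-x$, so that $\int_0^1 c(\dot{\zeta_t})\,dt = c(y-x)$; this shows the infimum is attained and equals $c(y-x)$, establishing the first claim.

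For the strictly convex case I would invoke the equality condition in Jensen's inequality: when $c$ is strictly convex, equality in the displayed estimate holds if and only if $\dot{\zeta_t}$ is constant for almost every $t$. Since its mean equals $y-x$, that constant can only be $y-x$, and integrating $\dot{\zeta_t} = y-x$ with the initial condition $\zeta_0 = x$ recovers precisely $\zeta_t = x + t(y-x)$. Hence the line is the unique minimizer.

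The step I expect to be the main obstacle is the rigorous justification of the equality case for strictly convex $c$ in the vector-valued setting, since the textbook equality statement is usually phrased in one dimension. A clean route is contrapositive: if $\dot{\zeta_t}$ is not almost-everywhere constant, then $[0,1]$ splits into two sets of positive measure on which the conditional averages of $\dot{\zeta_t}$ differ, and applying strict convexity of $c$ to these two barycenters yields a strict inequality in Jensen, contradicting the assumed equality. A secondary subtlety is the admissibility class: both the fundamental theorem of calculus and Jensen require $\zeta_t$ to be absolutely continuous with integrable velocity, so I would fix this as the standing regularity (consistent with the segmental $C^1$ convention adopted after the definition of time-dependent optimal transport) and note that enlarging the class of competitors does not lower the infimum.
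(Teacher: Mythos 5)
Your proof is correct and follows exactly the route the paper indicates: the paper omits the proof, stating only that the lemma "can be proved by Jensen's inequality," and your argument is a complete and careful execution of that strategy, including the attainment by the straight line and the equality case for strict convexity.
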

This lemma can be proved by Jensen's inequality, which is also seen in Rectified Flow~\cite{liu2022flow} thus is hereby omitted.
\begin{definition}
    The function $\mathcal{X}:X \rightarrow \R$ is called to be $c$-concave if there exists a function $\mathcal{Y}:Y \rightarrow \R$ such that
    \begin{equation*}
        \mathcal{X} = \inf_{y\in Y} c(x,y) - \mathcal{Y}(y).
    \end{equation*}
\end{definition}
\begin{theorem}\label{theorem:displacement}
    If the transport cost function on $\R^n$ is strictly convex $c(x,y)=c(x-y)$ and $c(0)=0$, there is an unique $c$-concave function $\psi(x)$ presenting the solution to time-dependent optimal transport by
    \begin{equation}\label{eq:convex-zeta}
        \zeta_t(x)=x-t\nabla c^*(\nabla \psi(x)), t\in[0,1],
    \end{equation}
    where $c^*$ is Legendre transformation of $c$.
    Furthermore, if we specifically have $c(x,y)=|x-y|^2/2$ and $\mu$, $\nu$ are probability measures on $\R^n$, then there exists convex $\psi$ such that $\nabla \psi_{\#}\mu=\nu$ and the solution in Eq.~\eqref{eq:convex-zeta} is reformed to displacement interpolation:
    \begin{equation}\label{eq:displacement-interpolation}
        \rho_t=[(1-t)\mathrm{id}+t\nabla\psi]_{\#} \mu, t\in[0,1].
    \end{equation}
\end{theorem}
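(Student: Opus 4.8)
The plan is to recognize this statement as the $\R^n$ version of Brenier's theorem together with McCann's displacement interpolation, recast inside the time-dependent formulation of Eq.~\eqref{eq:time-dependent-ot}. First I would use Lemma~\ref{lemma:covex-cost} to collapse the time-dependent problem onto the static one: since $c$ is strictly convex, the optimal trajectory between any fixed endpoints $x$ and $y$ is the straight segment, with cost exactly $c(y-x)$. Hence the infimum defining the time-dependent optimal transport coincides with the Monge cost for $c(x-y)$, and it suffices to identify the optimal map $T$ and then transport each particle along the line $\zeta_t(x)=x+t(T(x)-x)$.

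To identify $T$ I would invoke Kantorovich duality. The dual problem yields a $c$-concave potential $\psi$ (in the sense of the definition preceding the theorem) together with its $c$-transform $\phi$, satisfying $\psi(x)+\phi(y)\le c(x-y)$ everywhere and $\psi(x)+\phi(y)=c(x-y)$ on the support of any optimal coupling. Fixing an optimal pair $(x,y)$ with $y=T(x)$, the equality case shows that $z\mapsto c(x-z)-\phi(z)$ attains its infimum at $z=T(x)$, so by the envelope identity its first-order condition reads $\nabla\psi(x)=\nabla c(x-T(x))$. Because $c$ is strictly convex, $\nabla c$ is a bijection whose inverse is $\nabla c^*$ (the Legendre identity $(\nabla c)^{-1}=\nabla c^*$); inverting gives $x-T(x)=\nabla c^*(\nabla\psi(x))$, hence $T(x)=x-\nabla c^*(\nabla\psi(x))$ and $\zeta_t(x)=x-t\nabla c^*(\nabla\psi(x))$. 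Uniqueness of $\psi$ follows from the uniqueness of the Kantorovich potentials (up to an additive constant fixed by the $c$-concavity normalization) on the support of $\mu$.

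For the quadratic specialization I would set $c(z)=|z|^2/2$, so that $c^*=c$ and $\nabla c^*=\mathrm{id}$, reducing the Part~1 formula to $\zeta_t(x)=x-t\nabla\psi(x)$. Absorbing the quadratic term through $u(x)=|x|^2/2-\psi(x)$ converts the $c$-concavity of $\psi$ into convexity of $u$, and $T(x)=x-\nabla\psi(x)=\nabla u(x)$ becomes the gradient of a convex function. This is precisely Brenier's theorem: relabeling $u$ as $\psi$, there exists a convex $\psi$ with $\nabla\psi_{\#}\mu=\nu$, and linearly interpolating each particle as $\zeta_t(x)=(1-t)x+t\nabla\psi(x)$ pushes $\mu$ forward to $\rho_t=[(1-t)\mathrm{id}+t\nabla\psi]_{\#}\mu$, which is exactly the claimed displacement interpolation.

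The main obstacle is the differentiability step used to pass from the supporting equality to the pointwise relation $\nabla\psi(x)=\nabla c(x-T(x))$. A $c$-concave potential is only Lipschitz and semiconcave a priori, so I would have to argue that it is differentiable $\mu$-almost everywhere (via Rademacher's theorem combined with the semiconcavity inherited from the smoothness and strict convexity of $c$) and that, at points of differentiability, the infimum in the $c$-transform is attained at the unique $T(x)$ solving the gradient relation. Establishing existence of the optimal potentials (Kantorovich duality under suitable moment or integrability hypotheses on $\mu$ and $\nu$) and the measurability and single-valuedness of $T$ are the remaining technical points; once differentiability is secured, the algebraic inversion through $\nabla c^*$ and the passage to displacement interpolation are routine.
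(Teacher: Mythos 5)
Your proposal is correct and follows essentially the same route as the paper's proof in Appendix~\ref{apdx:displacement}: Kantorovich duality with $\psi(x)+\psi^c(y)=c(x,y)$ on the support of the optimal plan, the first-order condition $\nabla c(x_0-y_0)=\nabla\psi(x_0)$, inversion via $(\nabla c)^{-1}=\nabla c^*$, and then the quadratic specialization. If anything your write-up is more careful than the paper's: you make explicit the substitution $u(x)=|x|^2/2-\psi(x)$ that turns the $c$-concave potential into Brenier's convex potential (a relabeling the paper elides when jumping from $\zeta_t=x-t\nabla\psi$ to $\rho_t=[(1-t)\mathrm{id}+t\nabla\psi]_{\#}\mu$), and you correctly flag the $\mu$-a.e.\ differentiability of $\psi$ as the technical point the first-order argument silently relies on.
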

This theorem, which is fully proved in Appendix~\ref{apdx:displacement}, asserts that time-dependent optimal transport can be achieved through the linear interpolation of identical mapping and primal optimal transport mapping, provided that the cost function $c(x,y)$ is half of the square of Euclidean distance. 
In other words, optimality is realized throughout the entire transport process, resulting in $\zeta_t$ being the optimal transport from $\mu$ to ${\zeta_t}_{\#}\mu$.
By carefully formulating the implicit cost function $c$ according to Theorem~\ref{theorem:displacement}, we can naturally attain the optimal scheme through a linear transformation from the source to the destined distribution.

\section{Rethink DPMs by FreeFlow}\label{sec:rethink}
FreeFlow is valuable for its powerful theoretic viewpoint to elaborately explain benefits and reveal essential drawbacks behind DPMs. 
We demonstrate that our framework naturally encapsulates classic patterns of diffusion formulations in Section~\ref{sec:diffusion-pattern}. 
We then typically rethink straight line generation by revealing potential jeopardize of shock waves and deducing the optimality equation in Section~\ref{sec:shock-wave}. 

\begin{table}[t]
    \centering
    \begin{tabular}{c|c|c|c|c}
    \hline
    Type & Pattern & Methods  & $D_t$ & $\nabla\Psi(x)$\\
    \hline \hline
    \multirow{3}{*}{Stochastic} & Markovian & DDPM  & $\beta_t$ & $\beta_t x$ \\
    \cline{2-5}
    &\multirow{2}{*}{Ito Process}&VP-SDE& $\beta_t$      & $\beta_t x$  \\
    &                    &VE-SDE& $\dot{\alpha_t}$  & $0$   \\
    \hline
    \multirow{2}{*}{Deterministic}&\multirow{2}{*}{ODE}&DPM-Solver&$0$ &$f_t(x)$ \\
    &                 &PFGM& $0$               &$f_t(x)$\\
    \hline
    \end{tabular}
    \caption{The diffusion patterns in typical methods are represented by energy functional $E(\rho_t)$ of FreeFlow framework. 
    Corresponding values of parameters and formulation of functions are summarized in this table.\protect\footnotemark}
    \label{tab:forward_reverse}
\end{table}
\footnotetext{Specific forms of $\nabla\Psi$ for ODE pattern are included in Section~\ref{sec:diffusion-pattern} and hereby simplified to $f_t$ in the table due to complexity.}

\subsection{Diffusion Pattern}\label{sec:diffusion-pattern}
During the forward process, clean images undergo a gradual addition of stochastic noise until they are finally transformed into scheduled distribution.
Otherwise, the reverse process acting as the counterpart recovers noise to origin input by predictions from models.
FreeFlow can be used to summarize the diffusion patterns of mainstream DPMs as one of its applications.
The Theorem~\ref{theorem:fp} allows for various patterns to be consistently analyzed, with their parameters and formulations summarized in Tab.~\ref{tab:forward_reverse}.

\textbf{DDPM} takes the assumption of Markovian process that the conditional probability (or transition) $q(x_t|x_{t-1}) \!\sim\! \mathcal{N}(\sqrt{1-2\beta_t}x_{t-1}, 2\beta_t \mathbf{I})$ is a Gaussian distribution parameterized by $\beta_t$ for random variable $x_t$ and $x_{t-1}$, which leads to the final normal distribution of $q(x_t|x_0)$.
Note that $\beta_t$ hereby in perturbation kernel is the same parameter defined in Proposition~\ref{prop} impacting the velocity of drift.
In fact, such diffusion procedure from origin input $x_0$ to $x_t$ is equivalent to the gradient flow of free energy defined in Eq.~\eqref{eq:energy_defition}, because any continuous state Markovian process satisfies Chapman-Kolmogorov (CK) equation:
\begin{equation}
    q(x_{t_3}|x_{t_1}) = \int q(x_{t_3}|x_{t_2})q(x_{t_2}|x_{t_1}) dx_{t_2},
\end{equation}
where $t_3 > t_2 > t_1$.
CK equation is known to be directly derived to the FP equation which is proved in Section~\ref{sec:example} as an example of FreeFlow.
By definition of $D_t$ and $\nabla\Psi$ listed in Tab.~\ref{tab:forward_reverse}, we will obtain approximate standard normal distribution if images are perturbed by DDPM.

\textbf{SDE} methods (\eg, VE-SDE, VP-SDE) directly adopt Ito process as the diffusion procedure to produce perturbed data through Eq.~\eqref{eq:ito}.
While they differ from DDPM in inspiration and external form, the discrepancies can be limited to selection on the form of $D_t$ and $\Psi$.
As shown in Tab.~\ref{tab:forward_reverse}, VP-SDE is equivalent to DDPM with the drift item preserved that is able to be explained by Proposition~\ref{prop}, too.
Nevertheless, the drift item disappears in VE-SDE and $D_t$ varies by function $\alpha_t$ of $t$ making $q(x_t|x_{t-1}) \sim \mathcal{N}(x_{t-1}, \sqrt{2(\alpha_t - \alpha_{t-1})} \mathbf{I})$, where the variance increases solely.
SDEs uniformly evolves to normal distribution because of the entropy item, which is distinctly different from the following ODE methods.

\textbf{ODE} methods (\eg, DPM-Solver, PFGM, GenPhys) otherwise simplifies Ito process to a deterministic probability evolution via neglecting the diffusivity item, which makes it more likely to regard as flow methods~\cite{dinh2014nice}. 
Their noise scheduler may not necessarily proceed toward normal distribution consequently.
However, the simplified formulation, Eq.~\eqref{eq:ODE}, is obtained at the cost of more complex velocity field $f_t(x_t)$ conformed to continuity equation and should be invertible.
Their structures are thus not unique yet required to satisfy initial condition and possess straightforward final distribution for sampling backwards.
DPM-Solver rewrites Eq.~\eqref{eq:ito} to Eq.~\eqref{eq:ODE} and takes $\xi_t(X_t)-\sigma_t^2\nabla\log\rho_t(X_t)/2$ as $f_t(X_t)$.
In addition, PFGM involves Poisson equation and utilize Green's function to give $f_t$ by $f_t=\int \frac{(x-y)p(y)}{S_{n-1}(1)\|x-y\|^n} dy$, where $S_{n-1}(1)$ is the surface area of the unit $(n-1)$-sphere. 
GenPhy extends it to smooth PDEs whose solutions should behave as probability density.


The formulation of forward processes is varied, however, the flow directions of probability are consistently proved to be the Wasserstein gradient flow of energy $E$ proposed in Section~\ref{sec:example}.
In summary, FreeFlow generally explain the extensively adopted forward process as discrete variant consistently controlled by the gradient flow of $E(\rho)$.

\begin{figure}[t]
    \centering
    \includegraphics[width=0.85\columnwidth]{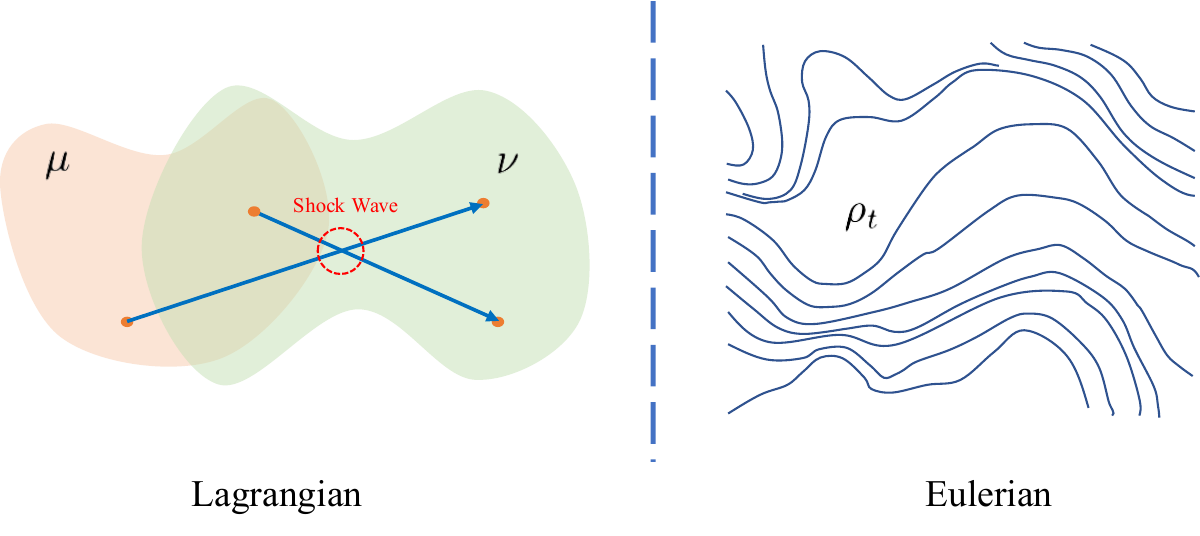}
    \caption{Illustration for Lagrangian (path lines, left) and Eulerian (streamlines, right) descriptions. A random transport from distribution $\mu$ to $\nu$ may trigger shock waves by the intersections as only individual particles are concerned in Lagrangian description; however, the probability density field $\rho_t$ at all positions are simultaneously presented in Euleraian manner without intersections.}
    \label{fig:shock-wave}
\end{figure}

\subsection{Shock Waves and Optimality Equation}\label{sec:shock-wave}
Generating images using DPMs can be a time-consuming process due to the multiple iterations of predictions during the reverse process. 
To eliminate this issue, one might anticipate a solution that simplifies the cumbersome gradual inversion into a single step. 
This possibility is explored in rectified flow~\cite{liu2022flow} where a linear transport track is effectively employed and a trick named reflow is additionally proposed to avoid the intersections of trajectories.
Nonetheless, flaws worthy of consideration are contained in rectified flow because of the disregarded cost function.

Indeed, shock waves illustrated in Fig.~\ref{fig:shock-wave}, as the essential form of the crossing, are possible to happen in finite time for compressible fluid even with smooth initial conditions when evolving by the Lagrangian method, which is the reason for its inevitable requirement for reflow. 
We have demonstrated in Lemma~\ref{lemma:covex-cost} that for a convex cost function, the optimal transport curve corresponds precisely to a straight line where particles move at a constant speed.
Transferring to Eulerian description, we reformulate it to a vector field with the lemma as follow.
\begin{lemma}\label{lemma:geodesic_euler}
    Let $v_0:\R^n \rightarrow \R^n$ be a differentiable vector field and $\zeta_t(x)=x-tv_0(x)$ is the trajectory field of particles with uniform motion, then the Eulerian velocity field $v_t$ associated with $\zeta_t$ satisfies
    \begin{equation}\label{eq:geodesic_euler}
        \frac{\partial v_t}{\partial t}+ v_t \cdot \nabla v_t=0.
    \end{equation}
\end{lemma}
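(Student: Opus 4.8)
The plan is to read off the Eulerian field $v_t$ from the prescribed Lagrangian flow and then verify Eq.~\eqref{eq:geodesic_euler} by differentiating the defining correspondence along trajectories. The particle issued from $x$ follows $\zeta_t(x)=x-tv_0(x)$, which is affine in $t$, so its velocity $\partial_t\zeta_t(x)=-v_0(x)$ is independent of $t$ and its acceleration $\partial_t^2\zeta_t(x)$ vanishes identically; the motion is uniform. Following the Lagrangian--Eulerian correspondence of Eq.~\eqref{eq:Lagrange-Euler}, I would characterize $v_t$ as the unique field satisfying $v_t(\zeta_t(x))=\partial_t\zeta_t(x)=-v_0(x)$ for every $x$, i.e. $v_t$ records at each occupied position the common velocity of the particle sitting there.

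The conceptual heart of the argument is the standard kinematic identity that the acceleration of a fluid particle equals the material derivative of the Eulerian velocity, $\partial_t v_t+v_t\cdot\nabla v_t$. Since every trajectory is a straight line traversed at constant speed, this acceleration is everywhere zero, which is exactly Eq.~\eqref{eq:geodesic_euler}. Thus the lemma is really the statement that vanishing particle acceleration is equivalent to the inviscid Burgers structure of the velocity field.

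To make this rigorous I would differentiate the relation $v_t(\zeta_t(x))=-v_0(x)$ with respect to $t$ at fixed $x$. The right-hand side is constant in $t$, so its $t$-derivative is $0$; the left-hand side, by the chain rule, equals $(\partial_t v_t)(\zeta_t(x))+Dv_t(\zeta_t(x))\,\partial_t\zeta_t(x)$, where $Dv_t$ denotes the spatial Jacobian. Substituting $\partial_t\zeta_t(x)=v_t(\zeta_t(x))$ converts the second term into $(v_t\cdot\nabla)v_t$ evaluated at $\zeta_t(x)$, yielding $\partial_t v_t+v_t\cdot\nabla v_t=0$ at $y=\zeta_t(x)$. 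Since this holds for arbitrary $x$, it holds at every point $y$ lying in the image of $\zeta_t$.

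The step demanding the most care, and the main obstacle, is precisely the well-definedness of $v_t$ as a single-valued field: the characterization above presupposes that $x\mapsto\zeta_t(x)$ is injective so that each occupied position carries one velocity. Differentiating $\zeta_t$ gives Jacobian $I-t\,Dv_0$, and the map stays a diffeomorphism onto its image only while this matrix is nonsingular, which by the Lipschitz existence-uniqueness fact following Eq.~\eqref{eq:Lagrange-Euler} is guaranteed on an initial time interval. The breakdown of invertibility is exactly where trajectories cross and $v_t$ becomes multivalued, i.e. where the shock waves of this section form; so I would state and prove Eq.~\eqref{eq:geodesic_euler} on the interval of times preceding shock formation, which is all that the surrounding discussion requires.
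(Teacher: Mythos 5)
Your proof is correct and follows essentially the same route as the paper's: both identify the particle acceleration $\frac{d^2}{dt^2}\zeta_t(x)$ with the material derivative $\partial_t v_t + v_t\cdot\nabla v_t$ evaluated along the trajectory and observe that it vanishes because $\zeta_t(x)$ is affine in $t$. The one substantive addition is your discussion of when $v_t$ is actually a well-defined single-valued field (invertibility of $x\mapsto\zeta_t(x)$, i.e.\ nonsingularity of $I - t\,Dv_0$), a point the paper's two-line proof silently assumes; this is a genuine improvement in rigor rather than a different method.
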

\begin{proof}
    Since the velocity is a constant, the second derivative of $\zeta_t(x)$ is zero.
    We naturally have
    \begin{equation*}
        \frac{d^2}{dt^2} \zeta_t(x)=\frac{\partial v_t(\zeta_t(x))}{\partial t}+ v_t(\zeta_t(x)) \cdot \nabla v_t(\zeta_t(x))=0,
    \end{equation*}
    which is Eq.~\eqref{eq:geodesic_euler}.
\end{proof}
\begin{theorem}\label{theorem:characteristics}
    If the velocity field $v_t$ associated with $\zeta_t$ is consistently Lipschitz continuous and $\mu$ is the initial probability measure, then $\rho_t={\zeta_t}_{\#} \mu$ is the unique solution to 
    \begin{equation*}
        \frac{\partial \rho_t}{\partial t}+\nabla\cdot(\rho_t v_t)=0, \rho_0=\mu.
    \end{equation*}
    Combined with Lemma~\ref{lemma:geodesic_euler}, the time-dependent optimal transport in Eulerian view is given by
    \begin{equation}\label{eq:optimality-equation}
        \left\{
        \begin{array}{l}
            \frac{\partial \rho_t}{\partial t}+\nabla\cdot(\rho_t v_t)=0, \rho_0=\mu, \\
            \frac{\partial v_t}{\partial t}+ v_t \cdot \nabla v_t=0.
        \end{array}
        \right.
    \end{equation}
\end{theorem}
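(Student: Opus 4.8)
The plan is to treat the theorem in two stages: first establish that the curve of pushforward measures $\rho_t = {\zeta_t}_{\#}\mu$ is the unique weak solution of the continuity equation with datum $\mu$, and then observe that the displayed system follows immediately by adjoining the Burgers-type relation already supplied by Lemma~\ref{lemma:geodesic_euler}. The entire content is therefore concentrated in the existence-and-uniqueness claim for the continuity equation, and throughout I would exploit the hypothesis that $v_t$ is Lipschitz continuous, which by the statement recorded below Eq.~\eqref{eq:Lagrange-Euler} guarantees that the flow $\zeta_t$ is well defined, bijective, and globally Lipschitz, with $\frac{d}{dt}\zeta_t(x) = v_t(\zeta_t(x))$ and $\zeta_0 = \mathrm{id}$.

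For existence I would argue weakly. Fix a test function $\phi \in C_c^\infty(\R^n)$ and, using the definition of pushforward, write $\int_{\R^n} \phi \, d\rho_t = \int_{\R^n} \phi(\zeta_t(x)) \, d\mu(x)$. Differentiating in $t$ under the integral sign---justified by the compact support of $\phi$ together with the uniform Lipschitz bound on $\zeta_t$---and applying the chain rule with the characteristic ODE gives
\begin{equation*}
\frac{d}{dt}\int_{\R^n} \phi \, d\rho_t = \int_{\R^n} \nabla\phi(\zeta_t(x)) \cdot v_t(\zeta_t(x)) \, d\mu(x) = \int_{\R^n} \nabla\phi(y)\cdot v_t(y)\, d\rho_t(y),
\end{equation*}
where the last step is the change of variables $y = \zeta_t(x)$. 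This is precisely the weak form of $\partial_t\rho_t + \nabla\cdot(\rho_t v_t) = 0$, and $\rho_0 = {\zeta_0}_{\#}\mu = \mu$, so $\rho_t$ solves the Cauchy problem.

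Uniqueness is the crux, and I would handle it by a duality argument against the backward transport equation. Given any weak solution $\tilde\rho_t$, fix a terminal time $T$ and $\psi \in C_c^\infty(\R^n)$, and let $\phi_t$ solve $\partial_t\phi_t + v_t\cdot\nabla\phi_t = 0$ with $\phi_T = \psi$; by the method of characteristics $\phi_t(x) = \psi(\zeta_{t,T}(x))$, the value transported backward along the flow from time $t$ to $T$, which is well defined and Lipschitz because $\zeta_t$ is invertible. Pairing the continuity equation against $\phi_t$ and using that $\phi_t$ annihilates the transport operator yields $\frac{d}{dt}\int \phi_t \, d\tilde\rho_t = 0$, whence $\int\psi\, d\tilde\rho_T = \int \phi_0 \, d\mu$ is fixed by $\mu$ alone; since $\psi$ and $T$ are arbitrary, $\tilde\rho_T = \rho_T$. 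I expect this to be the main obstacle, because the backward solution $\phi_t$ is only Lipschitz rather than $C^1$ in general, so making the pairing and its time derivative rigorous requires either mollifying $v_t$ and passing to the limit, or invoking the classical well-posedness of the transport equation for Lipschitz fields; both routes rest on the bijectivity of $\zeta_t$ already granted in the preliminaries.

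Finally, the second assertion is immediate. The curve $\zeta_t(x) = x - t v_0(x)$ is exactly the uniform-motion trajectory field of Lemma~\ref{lemma:geodesic_euler}, so its Eulerian velocity $v_t$ satisfies $\partial_t v_t + v_t\cdot\nabla v_t = 0$, while the first part supplies the continuity equation for $\rho_t = {\zeta_t}_{\#}\mu$. Collecting the two relations produces the system in Eq.~\eqref{eq:optimality-equation}, which is the sought Eulerian formulation of the time-dependent optimal transport.
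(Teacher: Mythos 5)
Your proposal is correct and follows essentially the same route as the paper's proof: existence by differentiating $\int \phi\, d\rho_t = \int \phi\circ\zeta_t\, d\mu$ against test functions to recover the weak continuity equation, and uniqueness by duality against the backward transport equation $\partial_t\phi + v\cdot\nabla\phi = 0$ solved along characteristics. The only cosmetic difference is that you phrase uniqueness as ``$\int\psi\,d\tilde\rho_T$ is determined by $\mu$'' while the paper uses linearity to reduce to ``$\rho_0=0$ implies $\rho_T=0$''; these are equivalent, and your explicit flag that $\phi_t$ is merely Lipschitz is a point the paper leaves implicit.
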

This theorem (proved in Appendix~\ref{apdx:characteristics}) implicitly authorizes the bond between cost function $c$ and velocity field $v_t$ thus gives rise to optimality equation Eq.~\eqref{eq:optimality-equation} from the Eulerian perspective.
Moreover, the formulation of $c$ itself, as demonstrated in Theorem~\ref{theorem:displacement}, fundamentally determines the trajectory and ensures the elimination of shock waves. 
Therefore, the relation of strictly convex $c$ and optimal initial velocity field $v_0$ can be given by
\begin{equation}
    v_0(x) = -\nabla c^*(\nabla \psi),
\end{equation}
where $c^*$ and $\psi$ are executed by the same definitions in Theorem~\ref{theorem:displacement}.
That is to say, the shock waves triggered by intersections can be radically shun if we implement proper cost function by designing relevant initial velocity field.

Moreover, the refrained shock wave is also guaranteed by the Eulerian field which is suitable to be declared as differential automorphism groups.
We recall Lagrange-Euler converter in Eq.~\eqref{eq:Lagrange-Euler} thus note that the trace $\gamma_x(t)$ is actually a homeomorphism $g_t(x) \coloneqq \gamma_x(t)$ deduced by the velocity field.
The optimal map $\zeta_t$ is obtained if given $\zeta_t\coloneqq g_t$ and the trace collection $g_t$ is a differential homeomorphism collection which is ensured by the regularity of Monge-Ampere equation with specific conditions in Theorem~\ref{theorem:displacement}.

\section{Conclusion}\label{sec:conclusion}
In this paper, we present the concept of FreeFlow, which is a framework that explores the comprehensive understanding of DPMs. 
The FreeFlow proposes that the diffusion process is inherently the gradient flow of a free energy functional defined on Wasserstein space, by which the Fokker-Planck equation is derived as the connection to various diffusion algorithms. 
With a unified expression in our work, the diffusion pattern crucially operated as the target for training models is summarized thus enables the possible development from more profound tools.
By combining ideas from fluid dynamics and time-dependent optimal transport theory, we highlight that current models make random variables move in a Lagrangian manner. 
Furthermore, the unique optimal transport with strictly convex cost function is identified as a linear map based on the Lagrangian description, and the potential danger of shock waves occurring when randomly sampling data pairs is analyzed. 
To eliminate the problem, the formulation of the cost function in optimal transport is emphasized, particularly when adopting a Euclidean distance as the cost. 
In summary, FreeFlow allows us to examine current designs about the diffusion process for better understanding and future development.

\bibliographystyle{unsrt}  
\bibliography{reference,refer-relate}

\begin{thebibliography}{10}

\bibitem{goodfellow2020generative}
Ian Goodfellow, Jean Pouget-Abadie, Mehdi Mirza, Bing Xu, David Warde-Farley, Sherjil Ozair, Aaron Courville, and Yoshua Bengio.
\newblock Generative adversarial networks.
\newblock {\em Communications of the ACM}, 63(11):139--144, 2020.

\bibitem{karras2019style}
Tero Karras, Samuli Laine, and Timo Aila.
\newblock A style-based generator architecture for generative adversarial networks.
\newblock In {\em Proceedings of the IEEE/CVF conference on computer vision and pattern recognition}, pages 4401--4410, 2019.

\bibitem{rombach2022high}
Robin Rombach, Andreas Blattmann, Dominik Lorenz, Patrick Esser, and Bj{\"o}rn Ommer.
\newblock High-resolution image synthesis with latent diffusion models.
\newblock In {\em Proceedings of the IEEE/CVF conference on computer vision and pattern recognition}, pages 10684--10695, 2022.

\bibitem{devlin2018bert}
Jacob Devlin, Ming-Wei Chang, Kenton Lee, and Kristina Toutanova.
\newblock Bert: Pre-training of deep bidirectional transformers for language understanding.
\newblock {\em arXiv preprint arXiv:1810.04805}, 2018.

\bibitem{openai2023gpt4}
OpenAI.
\newblock Gpt-4 technical report, 2023.

\bibitem{loeschcke2022text}
Sebastian Loeschcke, Serge Belongie, and Sagie Benaim.
\newblock Text-driven stylization of video objects.
\newblock In {\em European Conference on Computer Vision}, pages 594--609. Springer, 2022.

\bibitem{zhang2022paddlespeech}
Hui Zhang, Tian Yuan, Junkun Chen, Xintong Li, Renjie Zheng, Yuxin Huang, Xiaojie Chen, Enlei Gong, Zeyu Chen, Xiaoguang Hu, et~al.
\newblock Paddlespeech: An easy-to-use all-in-one speech toolkit.
\newblock {\em arXiv preprint arXiv:2205.12007}, 2022.

\bibitem{sohl2015deep}
Jascha Sohl-Dickstein, Eric Weiss, Niru Maheswaranathan, and Surya Ganguli.
\newblock Deep unsupervised learning using nonequilibrium thermodynamics.
\newblock In {\em International Conference on Machine Learning}, pages 2256--2265. PMLR, 2015.

\bibitem{kingma2021variational}
Diederik Kingma, Tim Salimans, Ben Poole, and Jonathan Ho.
\newblock Variational diffusion models.
\newblock {\em Advances in neural information processing systems}, 34:21696--21707, 2021.

\bibitem{haxholli2023faster}
Etrit Haxholli and Marco Lorenzi.
\newblock Faster training of diffusion models and improved density estimation via parallel score matching, 2023.

\bibitem{ho2020denoising}
Jonathan Ho, Ajay Jain, and Pieter Abbeel.
\newblock Denoising diffusion probabilistic models.
\newblock {\em Advances in Neural Information Processing Systems}, 33:6840--6851, 2020.

\bibitem{song2020denoising}
Jiaming Song, Chenlin Meng, and Stefano Ermon.
\newblock Denoising diffusion implicit models.
\newblock {\em arXiv preprint arXiv:2010.02502}, 2020.

\bibitem{song2020score}
Yang Song, Jascha Sohl-Dickstein, Diederik~P Kingma, Abhishek Kumar, Stefano Ermon, and Ben Poole.
\newblock Score-based generative modeling through stochastic differential equations.
\newblock {\em arXiv preprint arXiv:2011.13456}, 2020.

\bibitem{song2019generative}
Yang Song and Stefano Ermon.
\newblock Generative modeling by estimating gradients of the data distribution.
\newblock {\em Advances in neural information processing systems}, 32, 2019.

\bibitem{lu2022dpm}
Cheng Lu, Yuhao Zhou, Fan Bao, Jianfei Chen, Chongxuan Li, and Jun Zhu.
\newblock Dpm-solver: A fast ode solver for diffusion probabilistic model sampling in around 10 steps.
\newblock {\em arXiv preprint arXiv:2206.00927}, 2022.

\bibitem{lu2022dpmpp}
Cheng Lu, Yuhao Zhou, Fan Bao, Jianfei Chen, Chongxuan Li, and Jun Zhu.
\newblock Dpm-solver++: Fast solver for guided sampling of diffusion probabilistic models.
\newblock {\em arXiv preprint arXiv:2211.01095}, 2022.

\bibitem{xu2022poisson}
Yilun Xu, Ziming Liu, Max Tegmark, and Tommi Jaakkola.
\newblock Poisson flow generative models.
\newblock {\em arXiv preprint arXiv:2209.11178}, 2022.

\bibitem{liu2023genphys}
Ziming Liu, Di~Luo, Yilun Xu, Tommi Jaakkola, and Max Tegmark.
\newblock Genphys: From physical processes to generative models.
\newblock {\em arXiv preprint arXiv:2304.02637}, 2023.

\bibitem{luo2022understanding}
Calvin Luo.
\newblock Understanding diffusion models: A unified perspective.
\newblock {\em arXiv preprint arXiv:2208.11970}, 2022.

\bibitem{mccann1997convexity}
Robert~J McCann.
\newblock A convexity principle for interacting gases.
\newblock {\em Advances in mathematics}, 128(1):153--179, 1997.

\bibitem{jordan1998variational}
Richard Jordan, David Kinderlehrer, and Felix Otto.
\newblock The variational formulation of the fokker--planck equation.
\newblock {\em SIAM journal on mathematical analysis}, 29(1):1--17, 1998.

\bibitem{carrilloExampleDisplacementConvex2008}
J.~Carrillo and Dejan Slep{\v c}ev.
\newblock Example of a displacement convex functional of first order.
\newblock {\em Calculus of Variations}, 36:547--564, January 2008.

\bibitem{matthesFamilyNonlinearFourth2009}
Daniel Matthes, Robert~J. McCann, and Giuseppe Savar'e.
\newblock A {{Family}} of {{Nonlinear Fourth Order Equations}} of {{Gradient Flow Type}}, January 2009.

\bibitem{ambrosioCalculusHeatFlow2014}
Luigi Ambrosio, Nicola Gigli, and Giuseppe Savar{\'e}.
\newblock Calculus and heat flow in metric measure spaces and applications to spaces with {{Ricci}} bounds from below.
\newblock {\em Inventiones mathematicae}, 195(2):289--391, February 2014.

\bibitem{monge1781memoire}
Gaspard Monge.
\newblock M{\'e}moire sur la th{\'e}orie des d{\'e}blais et des remblais.
\newblock {\em Mem. Math. Phys. Acad. Royale Sci.}, pages 666--704, 1781.

\bibitem{benamou2000computational}
Jean-David Benamou and Yann Brenier.
\newblock A computational fluid mechanics solution to the monge-kantorovich mass transfer problem.
\newblock {\em Numerische Mathematik}, 84(3):375--393, 2000.

\bibitem{liu2022flow}
Xingchao Liu, Chengyue Gong, and Qiang Liu.
\newblock Flow straight and fast: Learning to generate and transfer data with rectified flow.
\newblock {\em arXiv preprint arXiv:2209.03003}, 2022.

\bibitem{dinh2014nice}
Laurent Dinh, David Krueger, and Yoshua Bengio.
\newblock Nice: Non-linear independent components estimation.
\newblock {\em arXiv preprint arXiv:1410.8516}, 2014.

\end{thebibliography}
\newpage
\appendix
\section{Proof}\label{proof}
\subsection{Proof for $v=\nabla \varphi$}
\label{apdx:metrics}
\begin{proof}
    Suppose the kinetic energy in Eq.~\eqref{eq:norm} is minimized by $v_0$ and $\nabla\cdot w=0$, then for any $\varepsilon \ne 0$, $v_0+\varepsilon \frac{w}{\rho}$ is admissible.
    Due to the minimal energy, we have
    \begin{equation}
        \int \rho |v_0|^2 \leq \int \rho \left|v_0+\varepsilon\frac{w}{\rho}\right|^2,
    \end{equation}
    which should be always satisfied for any $\varepsilon$.
    This means that $\int v_0 \cdot w =0$, namely $v_0$ is orthogonal with any solenoidal vector fields $w$ thus there exists $\varphi$ that $v=\nabla\varphi$
\end{proof}
\subsection{Proof for Theorem~\ref{theorem:displacement}}\label{apdx:displacement}
\begin{proof}
    As the straight line is transport path of convex cost function, we thus consider $\zeta_1$ and prove $\zeta_1=\nabla c^*(\nabla \psi(x))$.
    For optimal transport scheme $\pi$, we have
    \begin{equation*}
        \psi(x)+\psi^c(y)=c(x,y),
    \end{equation*}
    where $(x,y)$ on the support set of $\pi$. We fix $(x_0,y_0)$ in support set and have
    \begin{equation*}
        \psi^c(y_0)=\inf_{x\in \Omega}c(x,y_0)-\psi(x).
    \end{equation*}
    The gradient will be zero at point $(x_0,y_0)$, namely $\nabla c(x_0,y_0)=\nabla \psi(x_0)$ and we obtain
    \begin{equation}\label{eq:interpolation}
        y_0 = x_0-(\nabla c)^{-1}(\nabla\psi(x_0)).
    \end{equation}
    Considering the convexity of $c$, the optimal transport is associated by Kantorovich potential $\psi$ thus given by $\zeta_1=x-\nabla c^*(\nabla \psi(x))$.
    Substituting $c(x,y)=|x-y|^2/2$ to Eq.~\eqref{eq:interpolation}, we have $\zeta_t=x-t\nabla \psi(x)$ and Eq.~\eqref{eq:displacement-interpolation} is obtained.
\end{proof}
\subsection{Proof for Theorem~\ref{theorem:characteristics}}\label{apdx:characteristics}
\begin{proof}
    We firstly prove that $\rho_t={\zeta_t}_{\#}\mu$ is the solution to the upper equation of Eq.~\eqref{eq:optimality-equation}.
    The map $t \mapsto \int \phi d \rho_t$ is Lipschitz for any smooth test function $\phi$ with compact support and its derivative with respect to $t$ is
    \begin{equation*}
    \begin{aligned}
        \frac{d}{dt} \int \phi d \rho_t &= \int \phi \frac{d\rho_t}{dt}\\
        &=-\int \phi d(\nabla\cdot(v_t\rho_t)) \\
        &=\int(\nabla\phi \cdot v_t) d\rho_t,
    \end{aligned}
    \end{equation*}
    where the second and third equation result from the continuity equation and the definition of divergence operator respectively.
    We rewrite $\int \phi d \rho_t$ by the definition of push forward measure and obtain
    \begin{equation*}
        \int \phi d \rho_t = \int \phi \circ \zeta_t d\mu.
    \end{equation*}
    Considering the compact support of $\phi$ and continuity of $\zeta_t$, $\phi \circ \zeta_t$ is Lipschitz so that give
    \begin{equation*}
        \frac{\partial (\phi \circ \zeta_t)}{\partial t}=(\nabla \phi \circ \zeta_t)\cdot \frac{\partial \zeta_t}{\partial t}=(\nabla\phi \circ \zeta_t)\cdot(v_t \circ \zeta_t).
    \end{equation*}
    Therefore, for $h>0$ we have
    \begin{equation*}
        \frac{1}{h} \left(\int \phi d \rho_{t+h} -\int \phi d \rho_t \right) = \int \left( \frac{\phi \circ \zeta_{t+h}-\phi \circ \zeta_t}{h} \right) d\mu,
    \end{equation*}
    where the integrand on the right hand converges to $\nabla(\phi \circ \zeta_t)\cdot v_t$ when $h \rightarrow 0$.
    We attain the conclusion that $t\mapsto\int\phi d\rho_t$ is differential for any $t$ and 
    \begin{equation}
        \frac{d}{dt} \int \phi d \rho_t = \int (\nabla\phi \circ \zeta_t)\cdot(v_t \circ \zeta_t) d\mu=\int \nabla \phi\cdot v_t d\rho_t,
    \end{equation}
    which proves that $\rho_t={\zeta_t}_{\#}\mu$ is the solution to the continuity equation.
    We then prove the uniqueness by supposing test function $\phi$ is Lipschitz with respect to $(x,t)$ with compact support and satisfies
    \begin{equation}
        \frac{\partial \phi}{\partial t}=-v\cdot\nabla\phi, \phi|_{t=T}=\phi_T.
    \end{equation}
    Similarly, $t\mapsto\int\phi d\rho_t$ is Lipschitz and for almost all $t$ we have
    \begin{equation*}
    \begin{aligned}
        \frac{d}{dt} \int \phi d \rho_t &= \int \phi_t \frac{d\rho_t}{dt} + \int \frac{\partial \phi_t}{\partial t} dt\\
        &=-\int \phi_t d(\nabla\cdot(v_t\rho_t))-\int (v_t\cdot\nabla\phi_t) d\rho_t=0.
    \end{aligned}
    \end{equation*}
    Therefore, we have 
    \begin{equation}\label{eq:rho_T}
        \int \phi_T d\rho_T=\int \phi_0 d\rho_0=0,
    \end{equation}
    which is kept for any $\phi_T$ such that $\rho_T=0$. 
    Thanks to the linearity of Eq.~\eqref{eq:optimality-equation}, the uniqueness can be given by proving that $\rho_0=0$ can derive $\rho_T=0$ for $\rho_t$ satisfying regularity conditions, which is presented in Eq.~\eqref{eq:rho_T}.
\end{proof}



\end{document}